\theoremstyle{definition}
\newtheorem{definition}{Definition}
\newtheorem{theorem}{Theorem}
\newtheorem{property}{Property}
\newtheorem{corollary}{Corollary}
\newtheorem{lemma}{Lemma}
\title{Bidirectional Temporal Plan Graph: Enabling Switchable Passing Orders for More Efficient Multi-Agent Path Finding Plan Execution}
\author{
    Yifan Su, Rishi Veerapaneni, Jiaoyang Li
}
\begin{document}

\maketitle

\begin{abstract}
The Multi-Agent Path Finding (MAPF) problem involves planning collision-free paths for multiple agents in a shared environment. The majority of MAPF solvers rely on the assumption that an agent can arrive at a specific location at a specific timestep. However, real-world execution uncertainties can cause agents to deviate from this assumption, leading to collisions and deadlocks. Prior research solves this problem by having agents follow a Temporal Plan Graph (TPG), enforcing a consistent passing order at every location as defined in the MAPF plan. However, we show that TPGs are overly strict because, in some circumstances, satisfying the passing order requires agents to wait unnecessarily, leading to longer execution time. To overcome this issue, we introduce a new graphical representation called a Bidirectional Temporal Plan Graph (BTPG), which allows switching passing orders during execution to avoid unnecessary waiting time. We design two anytime algorithms for constructing a BTPG: BTPG-naïve and BTPG-optimized. Experimental results show that following BTPGs consistently outperforms following TPGs, reducing unnecessary waits by 8-20\%.
\end{abstract}

\section{Introduction}
The Multi-Agent Path Finding (MAPF) problem involves finding paths for multiple agents to reach their respective destinations from their starting points in a shared environment without collisions \cite{stern_multi-agent_2021}.
The importance of solving the MAPF problem is reflected in its wide range of applications, such as warehouse automation \cite{varambally_which_2022}, traffic management \cite{li_scalable_2021}, and drone swarm coordination \cite{honig_trajectory_2018}.

The MAPF problem discretizes time into unit timesteps, assuming that an agent can reach a particular location at a particular timestep. However, in practice, agents face challenges such as communication delays, physical constraints, or hardware failures that prevent them from meeting this assumption. When an agent is unable to reach a location required by the MAPF plan at a specific timestep, it may result in deadlocks or collisions with other agents.


To address this issue, \citet{ma_multi-agent_2017} proposed a MAPF with Delay Probability (MAPF-DP) model where, at each timestep, each agent has a certain probability of stopping in place, leading to delays. Both \citet{ma_multi-agent_2017} and \citet{hoenig_multi-agent_2016} show that as long as the agents adhere to the inter-agent dependencies required by the MAPF plan, they can complete their respective paths without collisions or deadlocks. These dependencies ensure that the order in which the agents pass through \emph{every} location is consistent with the planned path. Both studies capture these dependencies by defining a graph representation, called the \emph{Temporal Plan Graph} (TPG). Requiring agents to execute according to the dependencies in the TPG ensures completion of the MAPF plan without collisions and deadlocks, even if agents reach some locations at different timesteps than specified in the MAPF plan.

\begin{figure}[t]
    \centering
    \begin{subfigure}[b]{0.12\textwidth}
         \centering
         \includegraphics[width=\textwidth]{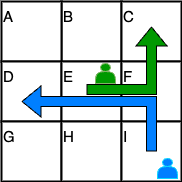}
         \subcaption{Scenario 1.}
         \label{motivating1}
    \end{subfigure}\hfill
    \begin{subfigure}[b]{0.12\textwidth}
         \centering
         \includegraphics[width=\textwidth]{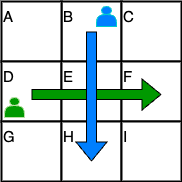}
         
         \subcaption{Scenario 2.}
         \label{motivating2}
    \end{subfigure}\hfill
    \begin{subfigure}[b]{0.2\textwidth}
         \centering
         \includegraphics[width=\textwidth]{./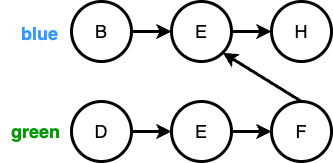}
             \subcaption{TPG for Scenario 2.}
             \label{fig: adapted TPG}
    \end{subfigure}
    \caption{Motivating examples. In Scenario 1 (2), the MAPF plan requries the agents to follow the arrows with the green agent passing through location F (E) before the blue agent.}
    \label{motivating}

\end{figure}



In Figure \ref{motivating1}, the MAPF plan specifies that the green agent should pass location F before the blue agent. If they follow the corresponding TPG, the green agent will pass F first. Switching this order would lead to a deadlock, with the blue agent trying to move from F to E and the green agent from E to F.
In contrast, Figure \ref{motivating2} shows a different scenario. Here, the green agent is supposed to pass location E first. If the green agent is delayed at D, the TPG makes the blue agent wait at B until the green agent passes E. However, if the green agent is delayed, the blue agent can safely pass E first without causing a deadlock. This flexibility in switching the order of passing can reduce unnecessary waiting and shorten the overall execution time.


The goal of this paper is to determine when the dependencies can be switched, allowing agents to pass such locations in a ``first-come-first-served'' manner in execution to minimize unnecessary waiting time.  Our main contributions are:
\begin{enumerate}
    \item Defining a new graphical representation called the \emph{Bidirectional Temporal Planning Graph} (BTPG) for capturing all such switchable dependencies in a MAPF plan;
    \item Introducing sufficient conditions for a BTPG to be provably collision-free and deadlock-free;
    \item Proposing two anytime algorithms for constructing a BTPG, namely
    BTPG-naïve and BTPG-optimized, and showing that following BTPGs consistently outperforms following TPGs, reducing unnecessary waits by 8-20\%.
\end{enumerate}


\section{Preliminaries}
\subsection{Problem Definition}
The input of the MAPF problem \cite{stern_multi-agent_2021} contains an undirected graph and a set of $n$ agents, each with a start location and a target location. Its output is a MAPF plan, which is a set of conflict-free paths for $n$ agents that move them from their respective start locations to their respective target locations. Time is discretized into unit timesteps. At every timestep, an agent either moves to a neighboring location or waits at its current location. The $i$th element in an agent's path represents the location of that agent at timestep $i$. An agent that finishes its path rests at its target location forever. There are two ways in which the paths of two agents can create a conflict: A \emph{vertex conflict} occurs when two agents occupy the same location at the same timestep, and a \emph{edge conflict} occurs when two agents traverse the same edge at the same timestep.

To enable the ``first-come-first-served'' mechanism, we assume that, during execution, every agent has an onboard collision-avoidance mechanism that can stop automatically when there is an obstacle (such as another agent) in front of it, a common function for modern robots.

In order to reflect the execution uncertainty in real applications, we use the MAPF with Delay Probabilities (MAPF-DP) model \cite{ma_multi-agent_2017}. That is, during execution, an agent may get stuck at its current location for a period of time instead of following its path to move to its next location. In addition to the two types of conflicts defined above, the original MAPF-DP model also disallows \emph{following conflicts}, where one agent enters a vertex that was occupied by another agent in the previous timestep. Their concern is that if the front agent suddenly stops, the following agent may run into the front agent. However, we allow the following conflicts in this paper for three reasons. First, the onboard collision-avoidance mechanism can easily guarantee collision-freeness in the above-mentioned situation because the following agent will stop when the front agent stops. Second, allowing following conflicts makes MAPF plans shorter and thus potentially leads to better execution times. Third, most modern MAPF planners, such as CBS-based~\cite{sharon_conflict-based_2015,gange_lazy_2021,li_eecbs_2021}, priority-based \cite{erdmann_multiple_1986,ma_searching_2019}, and rule-based \cite{sajid_multi-agent_2021,okumura_priority_nodate} planners allow for following conflicts, so we can directly use these off-the-shelf MAPF planners without changes.  

\subsection{Temporal Plan Graph (TPG)}

We now introduce the definition of TPGs along with its properties established in prior research. In line with previous work, we assume that following conflicts are disallowed here. In the next section, we will demonstrate how we modify the definition and properties of TPGs to accommodate scenarios where such following conflicts are indeed allowed.

\begin{definition}[TPG]
\label{def: TPG}
TPG \cite{ma_multi-agent_2017} is a directed acyclic graph $G=(V,E)$. 
Each vertex $v_i^m \in V$ represents a state of agent $m$ being at location $loc(v_i^m)$. Here, $i$ indicates that $loc(v_i^m)$ is the $i$th element in the path of agent $m$. 
Each edge establishes a precedence dependency between two states. We divide the edges $E$ into  type-1 edges $E_1$ and type-2 edges $E_2$. A type-1 edge $(v_{i}^m, v_{i+1}^m) \in E_1$ forces agent $m$ to enter $loc(v_{i+1}^m)$ only after it has entered $loc(v_{i}^m)$; A type-2 edge $(v_{i}^m, v_{j}^n) \in E_2$ forces agent $n$ to enter $loc(v_{j}^n)$ only after agent $m$ has entered $loc(v_{i}^m)$.
\end{definition}

\paragraph{TPG Construction}
To construct a TPG from a given MAPF plan, for every agent, we introduce a vertex for every element in its path that is different from the previous one, i.e., we omit states indicating waits. We add a type-1 edge for each pair of successive elements in the path of every agent. We add a type-2 edge if one agent visits a location before another. More specifically, if there is a \emph{conflict location} $loc(v_{i}^m)=loc(v_{j}^n)$, and agent $m$ visits it before agent $n$ in the MAPF plan, we introduce a type-2 edge $(v_{i+1}^m, v_{j}^n)$, indicating that only after agent $m$ left $loc(v_{i}^m)$ and enters its next location $loc(v_{i+1}^m)$, agent $n$ is allowed to enter $loc(v_{j}^n)$. 
Figure \ref{fig: adapted TPG} illustrates the TPG derived from the MAPF plan depicted in Figure \ref{motivating2}. For simplicity, we use $loc(v_i^m)$ to denote a vertex $v_i^m$. The arrow FE is a type-2 edge indicating that the green agent needs to pass through location E before the blue agent. Other arrows represent type-1 edges.

\paragraph{TPG Execution Policy}
At each timestep during execution, each agent $m$ is allowed to move to its next state $v_i^m$ only if, $\forall (v_j^n, v_i^m) \in E_2$, agent $n$ has already visited state $v_j^n$; otherwise, it must wait~\cite{ma_multi-agent_2017}.

\begin{property}[Cycle-free $\Leftrightarrow$ valid TPG]
\label{prop: valid TPG 1}
If and only if a TPG is cycle-free, TPG execution is guaranteed to succeed, i.e., agents are assured to reach their target locations without collisions or deadlocks~\cite{berndt_feedback_2020, coskun_deadlock-free_2021}. We refer to such a TPG as a \emph{valid} TPG.
\end{property}

\begin{figure}[t]
        \begin{minipage}[b]{0.23\textwidth}
             \centering
             \includegraphics[width=.51\textwidth]{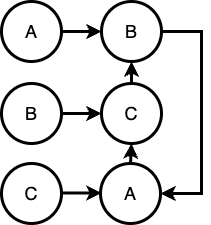}
             \caption{TPG for the scenario where 3 agents rotate in a cycle simultaneously.}
             \label{fig: rotation}
        \end{minipage}\hfill
         \begin{minipage}[b]{0.22\textwidth}
             \centering
            \includegraphics[width=.85\textwidth]{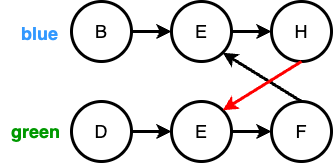}
             \caption{BTPG for the example depicted in \Cref{motivating2}.}
             \label{fig: new BTPG}
         \end{minipage}
\end{figure}

\subsection{Related Work}

The TPG execution policy asks agents to coordinate their passing orders at conflict locations, which sometimes leads to unnecessary waits. 
One way to reduce such unnecessary waits is to find MAPF plans that explicitly minimize coordination, such as SL-CBS~\cite{wagner_minimizing_2022} and DBS~\cite{okumura_offline_2023}. However, these models usually impose overly strict coordination constraints, making them more challenging to solve than regular MAPF and resulting in longer execution times.

Another way is to generate MAPF plans that consider potential delays during planning~\cite{ma_multi-agent_2017, atzmon_robust_2021}, which, however, demand prior knowledge of delays and are sometimes overly conservative.

A third way is to replan, for example, by checking if a particular passing order can be switched when delays happen during execution~\cite{berndt_feedback_2020,paul_fast_2023,pecora_loosely-coupled_2018,coskun_deadlock-free_2021}. However, these methods require additional computation during execution, potentially introducing further delays.

In contrast, our proposed BTPG idea generates a MAPF plan using regular MAPF planners and post-processing it by exploring all potential switchable dependencies during the planning phase. By doing so, we avoid increasing the complexity of solving MAPF, eliminate the need for prior knowledge of delay distribution, and eradicate the necessity for additional computation during execution.





\section{TPG that Allows Following}
In the TPG shown in \Cref{fig: adapted TPG}, suppose at timestep $t$, the blue and green agents are at B and E, respectively. However, at timestep $t+1$, the blue agent cannot reach E even if the green agent departs E at the same timestep. This is because the original definition of type-2 edges was predicated on disallowing following conflicts, which requires the blue agent to reach E only \emph{after} the green agent reaches F.

However, since we allow such following actions, we propose a refined definition for type-2 edges $(v_i^m, v_j^n)$. The revised definition specifies that agent $n$ can enter $loc(v_{j}^n)$ \emph{no earlier than} agent $m$ enters $loc(v_{i}^m)$.
With this adjustment, in \Cref{fig: adapted TPG}, the blue agent can enter E simultaneously with the green agent departing E for F.

It is worth noting that this change is orthogonal to our proposed BTPG techniques. Our techniques are applicable to both TPGs that allow following and those that do not.

 \subsection{Rotation Cycle}
Since agents can follow each other, they can rotate simultaneously, which was not allowed before in the original TPG definition. \Cref{fig: rotation} shows an example, where, at the same timestep, agent 1 goes from A to B, agent 2 goes from B to C, and agent 3 goes from C to A.
This leads to a cycle ($B\rightarrow A \rightarrow C \rightarrow B$) in the TPG but not a deadlock. We thus need to update the definition of valid TPGs.
 \begin{definition}[Rotation Cycle]
\label{def: rotation cycle}
   A \emph{rotation cycle} is a cycle in a TPG consisting of only type-2 edges, with the cycle containing more than two edges. Note that if a cycle contains only two edges, it leads to a deadlock, since rotating the corresponding two agents is an edge conflict. 
\end{definition}
\begin{property}[Valid TPG with rotations]
\label{prop: valid TPG }
A TPG (that allows following) is valid if it has no non-rotation cycles.
\end{property}

\section{Bidirectional TPG}\label{sec:BTPG}

\Cref{fig: adapted TPG} depicts the TPG for Scenario 2 (\Cref{motivating2}) where the green agent enters E before the blue agent. If we switch the passing order, letting the blue agent enter E first, then we must change the type-2 edge from $(F, E)$ to $(H, E)$. We refer to these two type-2 edges as a bidirectional pair.
\begin{definition}[Bidirectional pair]
\label{def: bType2}
A \emph{bidirectional pair} $(e, \tilde{e})$ consists of two type-2 edges $e = (v_{i+1}^m, v_{j}^n)$ and $\tilde{e} = (v_{j+1}^n, v_{i}^m)$ with conflict location $loc(v_{j}^n) = loc(v_{i}^m)$. We refer to one such edge as the \emph{reversed edge} of the other. 
\end{definition}
\begin{definition}[BTPG]
\label{def:BTPG}
    A \emph{Bidirectional TPG} (BTPG) is a TPG that contains bidirectional pairs. We use $E_{pair}$ to represent the set of type-2 edges that are in bidirectional pairs.
\end{definition}

\paragraph{BTPG Execution Policy}
The BTPG execution policy follows the TPG execution policy except that, for each bidirectional pair, only one edge is selected during execution based on a ``first-come-first-served'' manner. Specifically, either agent in the pair is allowed to enter the conflict location first. But when the first agent arrives, the type-2 edge that enables this agent to enter first is selected, while the other edge is discarded. In other words, edge $(v_{i+1}^m, v_j^n)$ is selected when agent $m$ reaches $v_i^m$, and edge $(v_{j+1}^n, v_{i}^m)$ is selected when agent $n$ reaches $v_j^n$.
\Cref{fig: new BTPG} displays the BTPG for Scenario 2.
Edges $(F,E)$ and $(H,E)$ are a bidirectional pair, indicating that both agents can pass through location E first. If, for example, the green agent reaches E first, then $(F,E)$ is selected, and $(H,E)$ is discarded.

\begin{figure*}[h]
    \centering
    \includegraphics[width=\textwidth]{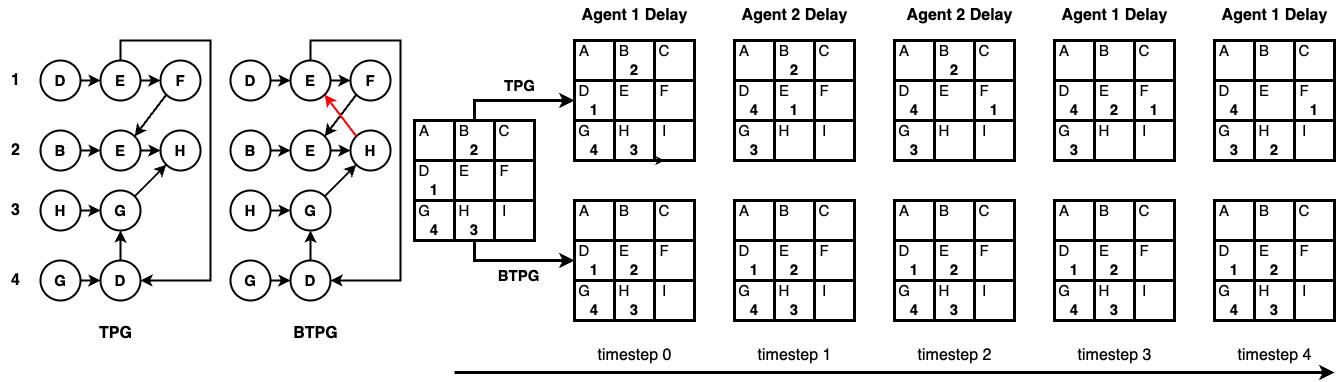}
    \caption{In our 3,900 simulations, BTPG never performed worse than TPG. 
    Thus ideally we could prove that BTPGs are strictly superior to TPGs. 
    However, here we show a hand-crafted adversarial example where a BTPG leads to longer execution time than a TPG under a specific set of delays.
    }
    \label{fig:counter-ex}
\end{figure*}
\paragraph{Execution Time of TPG vs BTPG} While we introduce BTPG to reduce unnecessary waits caused by certain type-2 edges in TPG, the execution time of following the BTPG policy can be longer than that of following the TPG policy in adversarial cases. We provide an example in \Cref{fig:counter-ex}, even though we have not observed any such cases among the 3,900 simulations tested in our experiments. 


\section{Construct BTPG from TPG}
To construct a BTPG, we first run a MAPF planner to obtain a MAPF plan and convert it to a valid TPG. We then check one by one whether each type-2 edge can be transformed into a bidirectional pair without resulting in an invalid BTPG. We terminate the algorithm when we reach the runtime limit or when all type-2 edges have been evaluated. We develop two such algorithms: BTPG-n and BTPG-o.  

\subsection{BTPG-naïve (BTPG-n)}
Since exactly one edge within each bidirectional pair will be selected during execution, a BTPG with $k$ bidirectional pairs can be conceptualized as comprising a collection of $2^k$ TPGs, each of which represents a different combination of edges in bidirectional pairs. Depending on the order in which the agents pass through the conflict location of each bidirectional pair, one of these TPGs will eventually be executed. Therefore, when all $2^k$ TPGs are valid, namely none of them contains any non-rotation cycles (according to Property \ref{prop: valid TPG }), the corresponding BTPG is guaranteed to be valid. Concurrently, not viewing a BTPG as $2^k$ TPGs but viewing it as a single graph with bidirectional pairs, a BTPG is valid if it contains only rotation cycles and self cycles, where a \emph{self cycle} is a cycle that involves both edges in a bidirectional pair, like $E\rightarrow H \rightarrow E \rightarrow F \rightarrow E$ shown in \Cref{fig: new BTPG}. A valid BTPG can have self cycles because a self cycle can never appear in one TPG as a TPG cannot contain both edges in a bidirectional pair. 

\begin{property}[Valid BTPG-naïve]
\label{prop: valid BTPG - naive}
    A BTPG is valid if it does not contain any Non-Rotation and Non-Self (NRNS) cycles.
\end{property}

Based on Property \ref{prop: valid BTPG - naive}, our first proposed algorithm BTPG-naïve (BTPG-n) (see Algorithm \ref{algo:Constructing BTPG - baseline}) examines type-2 edges one by one to detect NRNS cycles and change an edge to bidireticonal pairs if no NRNS cycles are found. As $\mathcal{G}$ is guaranteed to have no NRNS cycles at the beginning of each ``for'' iteration (\Cref{line:e-tilde}), our focus is solely on checking for NRNS cycles involving the newly added edge $\tilde{e}$ at \Cref{line:call-hasCycle}. This verification is carried out by running a Depth First Search (DFS) (see Algorithm \ref{algo: hasCycle}) from $v_{i}^m$ to determine if we can reach $v_{j+1}^n$ through NRNS cycles. 
Note that \Cref{line:for-loop,line:e-tilde} indicate that $v_{i+1}^m$ is not the first state of agent $m$, and $v_{j}^n$ is not the last state of agent $n$. We do not examine type-2 edges pointing from the first state of an agent because it starts at its first state, so no other agents can visit the corresponding conflict location before it. Similarly, we do not examine those pointing to the last state of an agent because it stays there without leaving, so no other agents can visit the corresponding conflict location after it.
\begin{algorithm}[t!]
        \caption{BTPG-naïve/optimized. The boxed \fbox{code} is only for BTPG-optimized.}
        
        \label{algo:Constructing BTPG - baseline}
        \SetKwFor{While}{\fbox{while $E_{pair}$ has been updated do}}{}
        
        \SetKwProg{Fn}{Function}{:}{end}
        \KwIn{TPG $G=(V, E_1 \cup E_2)$} 
        \KwOut{BTPG $\mathcal{G}$}
        \LinesNumbered
        \texttt{$E_{pair} \gets \emptyset$}\tcp*{set of edges in bidirectional pairs}
        \While{\label{line:Epair-updated}}
        {
            \For{\texttt{$e=(v_{i+1}^m, v_{j}^n)$} \textbf{in} \texttt{$E_2$} (or \textbf{until} \texttt{TimeOut})\label{line:for-loop}}
            {
                \texttt{$\tilde{e} \gets (v_{j+1}^n, v_{i}^m)$}\tcp*{reversed edge}\label{line:e-tilde}
                $E_{pair} \gets E_{pair} \cup \{e,\tilde{e}\}$,
                $E_2 \gets E_2 \setminus \{e\}$\;
                $\mathcal{G}\gets(V,E_1 \cup E_2 \cup E_{pair})$\;
                \If{\texttt{hasCycle}($\mathcal{G}, v_{i}^m, v_{j+1}^n, \fbox{$\{v_{i}^m\},$} \{e\}$)\label{line:call-hasCycle}}
                { 
                    $E_{pair} \gets E_{pair} \setminus \{e,\tilde{e}\}$,
                    $E_2 \gets E_2 \cup \{e\}$\;
                 }
            }
        }
        
        \Return $\mathcal{G}=(V,E_1 \cup E_2 \cup E_{pair})$\;
\end{algorithm}

\begin{algorithm}[t!]
        \caption{hasCycle. The boxed \fbox{code} is only for BTPG-optimized.}
        \label{algo: hasCycle}
        \SetKwFunction{MyFunction}{MyFunction}
        \SetKwFor{mIf}{\textcolor{blue}{if}}{\textcolor{blue}{then}}{\textcolor{blue}{end}}
        \SetKwProg{Fn}{Function}{:}{end}
        \KwIn{(1) BTPG $\mathcal{G} = (V, E_1 \cup E_2 \cup E_{pair})$, (2) current vertex for expansion $v_{i}^m$, (3) origin vertex $v_o$, and (4) set(s) of visited \fbox{vertices $V_{vis}$ and} edges $E_{vis}$ along the current DFS branch.} 
        \KwOut{\texttt{true} or \texttt{false}}
        \LinesNumbered
        \If{$v_i^m = v_o$\label{line:cycle-found}}
                {
                \uIf{$E_{vis} \subset E_2 \cup E_{pair}$ \textbf{and} $ |E_{vis}| > 2 $\label{line:detect-rotation-cycle}}{\Return \texttt{false}\tcp*{rotation cycle}\label{line:rotation-cycle}}
                \lElse{\Return \texttt{true}\label{line:non-rotation-cycle}}
                }
            \For{\texttt{$v_j^n$} \textbf{in} $\{v_j^n \in V \mid (v_i^m, v_j^n) \in E_1 \cup E_2 \cup E_{pair}$\}} 
            {
                $e \gets (v_i^m, v_j^n)$\;
                
                \If{\texttt{$e \in E_{pair}$}\label{line:e-in-pair}}
                {
                    \lIf{$(v_{j+1}^n, v_{i-1}^m) \in E_{vis}$ \fbox{\textbf{or} $\exists v_{i'}^m \in V_{vis}: i' < i$}\label{line:skip-e}}
                    {\texttt{continue}\label{line:skip-e2}}
                }
                
                \lIf{\texttt{hasCycle}($\mathcal{G}$, $v_j^n$, $v_o$, \fbox{$V_{vis} \cup \{v_j^n\}$,} $E_{vis} \cup \{e\}$)}
                {\Return \texttt{true}}
            }
            \Return \texttt{false}\;
             
\end{algorithm}

\Cref{algo: hasCycle} returns true if and only if it detects a NRNS cycle. It makes two key changes to regular DFS to exclude rotation and self cycles.  
First, since only one edge within each bidirectional pair can be selected during execution, our DFS must avoid visiting both edges in a bidirectional pair along one DFS branch. To do so, we use $E_{vis}$ to store the edges that our DFS has visited along the current branch and prune a child node if the reversed edge of the current type-2 edge is in $E_{vis}$ (\Cref{line:e-in-pair,line:skip-e}).
Second, when our DFS finds a cycle (\Cref{line:cycle-found}), we return false if it is a rotation cycle (\Cref{line:detect-rotation-cycle,line:rotation-cycle}) and true otherwise (\Cref{line:non-rotation-cycle}). Here, $E_{vis}$ to store the edges that our DFS has visited along the current branch.An additional implementation detail not explicitly outlined in this pseudo-code is that, to speed up our DFS, we maintain a set of visited vertices to prevent the algorithm from re-expanding the same vertex.

\subsubsection{Grouping}
\begin{figure}[t]
         \centering
         \includegraphics[width=0.45\textwidth]{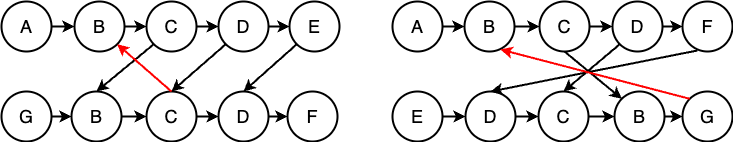}
     
        \caption{Two cases of grouping: The left case is when one agent follows another agent; the right one is when two agents move along the same path in opposite directions. Red arrows depict the reversed edges of edge $(C,B)$.}
        \label{fig:three TPGs}
\end{figure}

Motivated by \cite{berndt_feedback_2020}, we find two common cases in which BTPG-n cannot change the type-2 edges to bidirectional pairs shown in Figure \ref{fig:three TPGs}. They are the cases where two agents visit the same sequence of locations in the same or reversed order, which have consecutive type-2 edges in corresponding TPGs. As adding the reserved edge of any type-2 edge can form a NRNS cycle, BTPG-n cannot change any type-2 edges in such cases to bidirectional pairs.
Therefore, at \Cref{line:for-loop} of Algorithm \ref{algo:Constructing BTPG - baseline}, we merge such type-2 edges into groups and examine only type-2 edges that cannot form a group to reduce the runtime.

\subsection{BTPG-optimized (BTPG-o)}

\begin{figure}[t]
    \begin{subfigure}[a]{0.4\linewidth}
         \centering
         \includegraphics[width=0.87\linewidth]{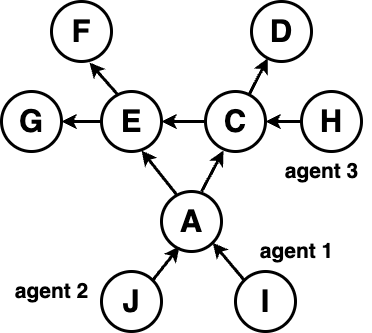}
         \caption{MAPF instance.}
         \label{fig: example path}
     \end{subfigure}
     \hfill
     \begin{subfigure}[a]{0.59\linewidth}
         \centering
         \includegraphics[width=.9\linewidth]{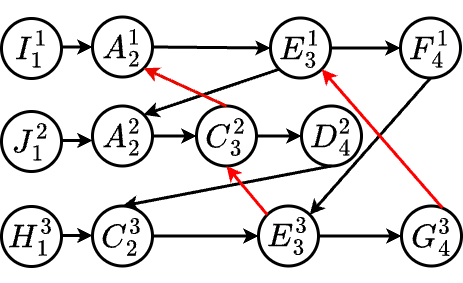}
         \caption{Corresponding BTPG. We use $X_i^m$ to depict vertex $v_i^m$ with $loc(v_i^m)=X$.}
         \label{fig: example BTPG}
     \end{subfigure}

     \caption{Example where some $2^k$ TPGs may never occur.}\label{fig:counter-example}
\end{figure}


Examining all $2^k$ TPGs is overly conservative as we find that not all $2^k$ TPGs can occur in practice. Consider the example in \Cref{fig:counter-example}. If we add bidirectional pairs for all three type-2 edges, the resulting BTPG admits a TPG with type-2 edges $(E^3_3,C_3^2)$, $(C_3^2,A_2^1)$, and $(F^1_4,E^3_3)$. BTPG-n would regard this BTPG as invalid as it contains a NRNS cycle $E^3_3 \rightarrow C^2_3 \rightarrow A^1_2  \rightarrow E^1_3 \rightarrow F^1_4 \rightarrow E^3_3$.

If we analyze this TPG, it leads to a deadlock where agent 1 is at location I, awaiting agent 2 to enter location C, agent 2 is at location A, awaiting agent 3 to enter E, and agent 3 is at location C, awaiting agent 1 to enter F. However, since agent 1 and 3 are at I and C, respectively, according to the BTPG execution policy, edge $(G_4^3,E_3^1)$ rather than edge $(F_4^1,E_3^3)$ would be selected. 
Thus, this TPG would never be encountered during execution. Next, we will go through several lemmas to formally reason about such cases.

\begin{figure*}[th!]
     \centering
     \includegraphics[width=0.95\textwidth]{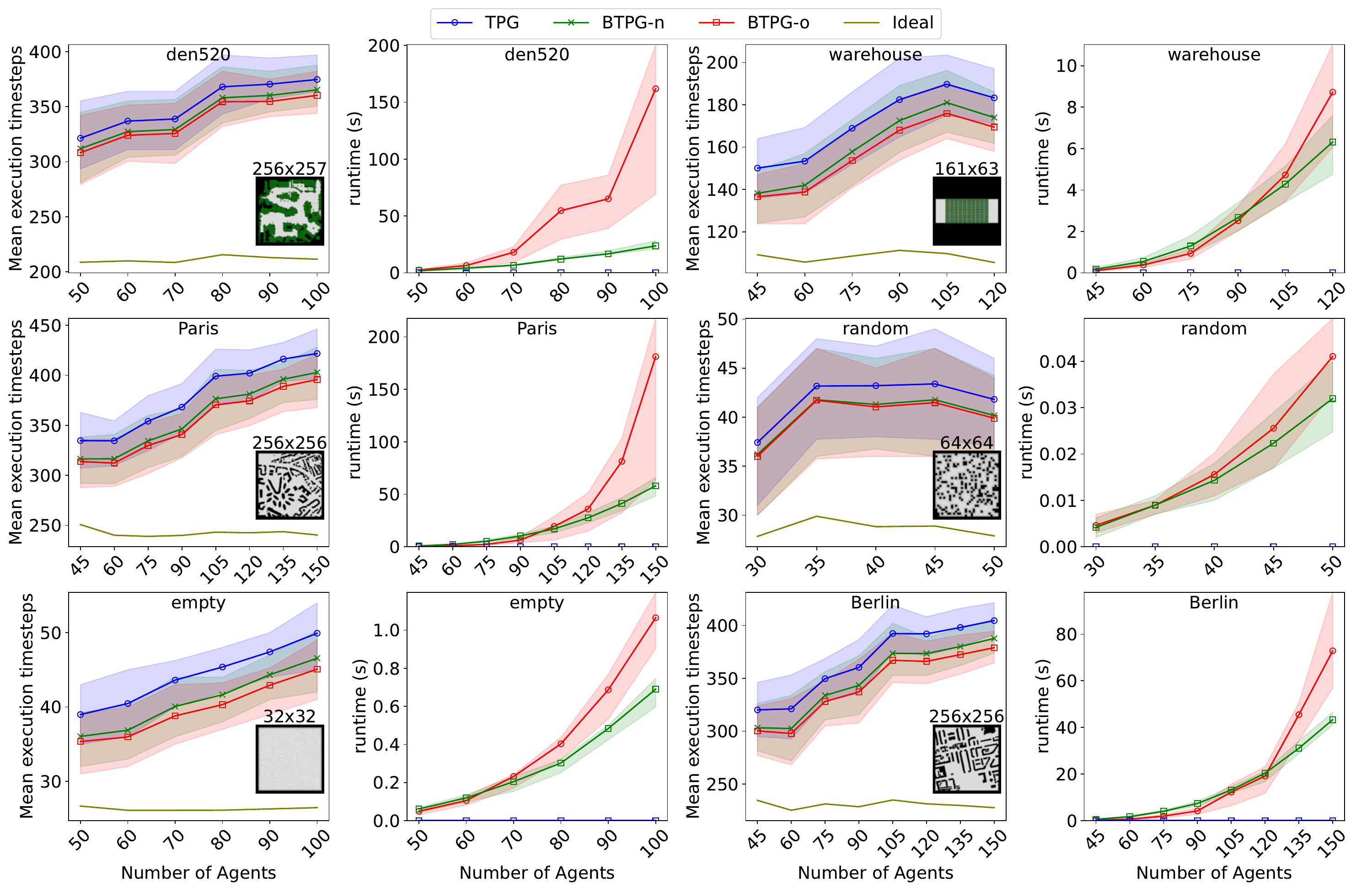}
     \label{random}
    \caption{Performance of TPG, BTPG-naïve, and BTPG-optimized. The shaded area represents the interquartile range (25\%-75\%) of the data distribution. The runtime of BTPG-n and BTPG-o is the time for the two algorithms to finish finding their respective bidirectional pairs. Even though BTPG-o takes longer to finish finding bidirectional pairs, it finds around 2x more bidirectional pairs than BTPG-n, so it is actually more efficient than BTPG-n (see Figure \ref{fig:anytime}). }
    \label{fig:comparison}
\end{figure*}

\begin{lemma}
\label{the: prev}
When agents face a deadlock caused by a cycle in a BTPG, no agents have visited any states in the cycle. 
\end{lemma}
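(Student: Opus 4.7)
My plan is a contradiction argument: assume that at the deadlock time $t$ a circular wait is realized along a cycle $C$ in the BTPG, and that some agent has nonetheless already visited some state of $C$; I will derive a contradiction by showing that one of the dependencies along $C$ is then already satisfied, so the circular wait cannot close.

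First I would structurally decompose $C$. Type-1 edges strictly increase the state index within a single agent, while type-2 edges---and the currently-selected edge of each bidirectional pair, whose semantics after selection coincide with those of an ordinary type-2 edge---always cross between distinct agents. Thus $C$ partitions into maximal type-1 runs, each a \emph{segment} of a single agent $m_j$ consisting of consecutive states $v_{a_j}^{m_j}, v_{a_j+1}^{m_j}, \ldots, v_{b_j}^{m_j}$, and consecutive segments are joined by inter-agent edges of the form $(v_{b_{j-1}}^{m_{j-1}}, v_{a_j}^{m_j})$.

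Next I would characterize what it means for $C$ to cause the deadlock at time $t$: every inter-segment edge must be unresolved, i.e., agent $m_{j-1}$ has not yet entered $v_{b_{j-1}}^{m_{j-1}}$, which is exactly why agent $m_j$ cannot enter $v_{a_j}^{m_j}$. I would then close the loop: if agent $m_j$ had visited any state in its segment, then by the monotonicity of type-1 traversal it must in particular have entered $v_{a_j}^{m_j}$ at some earlier time, which by the ``no-earlier-than'' semantics of the incoming type-2 (or active bidirectional) edge would require agent $m_{j-1}$ to have entered $v_{b_{j-1}}^{m_{j-1}}$ already; since visits persist, this contradicts the unresolved condition at time $t$ and establishes the lemma.

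The main obstacle I anticipate is rigorously justifying the second step---that \emph{every} inter-segment edge of $C$ really must be unresolved in order for $C$ to produce the deadlock at $t$. Care is also needed to handle bidirectional pairs correctly: the cycle may traverse an edge whose partner has been discarded by the first-come-first-served policy, so I need to invoke the BTPG execution policy to ensure that the selected edge continues to enforce the same type-2 dependency throughout the remainder of execution, and that rotation and self cycles (which are benign by Property~\ref{prop: valid TPG } and the discussion around Property~\ref{prop: valid BTPG - naive}) are excluded from the cycles under consideration.
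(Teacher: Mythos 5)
Your proposal is correct and takes essentially the same route as the paper: both are contradiction arguments that exploit the circular-wait structure of a deadlock cycle and propagate the ``visited'' status along the cycle's edges (the paper propagates forward---one visited state satisfies its outgoing edge, so the next agent can advance, and so on around the cycle---whereas you propagate backward from a visited state to an already-resolved incoming dependency). Your segment decomposition and your explicit worry about why every inter-segment edge must be unresolved simply make precise the same assumptions the paper's terser proof leaves implicit.
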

 \begin{proof}
 Suppose one state $v_i^m$ in the cycle is visited, then the edge $(v_i^m, u_j^n)$ in the cycle that points from this state is satisfied. This indicates that the agent $j$ can move to state $u_j^n$. We can similarly propagate the logic and show all states in the cycles can be visited, which is a contradiction to a deadlock. Therefore, all states in the cycle have not been visited. 
 \end{proof}
\begin{lemma}
\label{the: btype2}
    If edge $(v_i^n, v_j^m) \in E_{pair}$ is selected during execution, then agent $n$ has already visited state $v_{i-1}^n$.
\end{lemma}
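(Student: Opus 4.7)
The plan is to prove this lemma essentially by unwinding definitions, since the statement is a direct consequence of the BTPG execution policy rather than something requiring a substantive argument. First I would recall the canonical form of a bidirectional pair from Definition 3: the pair is $(e, \tilde{e})$ with $e = (v_{k+1}^m, v_{\ell}^n)$ and $\tilde{e} = (v_{\ell+1}^n, v_{k}^m)$, where $loc(v_k^m) = loc(v_\ell^n)$ is the shared conflict location. Then I would recall the selection rule from the BTPG execution policy: edge $(v_{k+1}^m, v_\ell^n)$ is selected at the moment agent $m$ reaches $v_k^m$, i.e., one step before the tail of the edge along agent $m$'s path.

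Next I would apply this rule to the edge named in the lemma. Writing $(v_i^n, v_j^m)$ in the canonical form from Definition 3 requires identifying $i$ with $k+1$; that is, $k = i-1$. The corresponding selection rule therefore says that $(v_i^n, v_j^m)$ can only be selected at the instant agent $n$ reaches $v_{i-1}^n$. Since ``selecting'' presupposes that this triggering event has occurred, agent $n$ must have already visited $v_{i-1}^n$ at the time of selection, which is exactly the claim.

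The only real subtlety is notational: lining up the index shift between the tail of the edge ($v_i^n$, with index $i$) and the state whose visitation triggers selection ($v_{i-1}^n$, with index $i-1$). I would make this explicit to avoid any off-by-one confusion, perhaps by reproducing the bidirectional-pair template alongside the edge at hand. No induction, case analysis, or appeal to the validity arguments (Properties 2, 3) is needed, so I do not anticipate any substantive obstacle; the lemma is purely a restatement of the execution policy in a form convenient for the deadlock arguments that follow (which is presumably why it is stated as a standalone lemma).
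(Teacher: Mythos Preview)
Your proposal is correct and matches the paper's approach exactly: the paper's proof consists of the single sentence ``This can be proved directly from the BTPG execution policy,'' and you have simply spelled out the index-matching that makes this immediate. There is nothing to add.
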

\begin{proof}
This can be proved directly from the BTPG execution policy.
\end{proof}
\begin{corollary}
\label{cor: btype at the position}
When agents face a deadlock caused by a cycle involving edge $(v_i^n, v_j^m) \in E_{pair}$, agent $n$ is at state $v_{i-1}^n$.
\end{corollary}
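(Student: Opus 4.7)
The plan is to combine the two preceding lemmas and exploit the fact that an agent visits the states along its own path in order.

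First I would invoke \Cref{the: prev}: since the cycle involves the vertex $v_i^n$, and since the deadlock is caused by this cycle, no agent has visited any state in the cycle, so in particular agent $n$ has not yet visited $v_i^n$. Next I would invoke \Cref{the: btype2}: because the edge $(v_i^n, v_j^m) \in E_{pair}$ is selected during execution, agent $n$ has already visited the predecessor state $v_{i-1}^n$. Putting these two facts together, agent $n$ has visited $v_{i-1}^n$ but has not yet visited $v_i^n$.

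Finally, I would argue that since an agent traverses the states along its path in the strict index order enforced by the type-1 edges $(v_{k}^n, v_{k+1}^n)$, and since the next state agent $n$ would move to after $v_{i-1}^n$ is $v_i^n$, the only consistent conclusion is that agent $n$ is currently sitting at state $v_{i-1}^n$. This immediately yields the corollary.

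I do not expect a real obstacle here: the statement is essentially a direct composition of \Cref{the: prev} and \Cref{the: btype2}. The only subtlety worth being careful about is ensuring the monotonic-progression argument is stated explicitly, namely that by the type-1 edge semantics an agent cannot be at any $v_k^n$ with $k \ge i$ without having visited $v_i^n$ first, which contradicts \Cref{the: prev}. Once that is spelled out, the corollary follows with no further work.
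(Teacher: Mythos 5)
Your proof is correct and follows essentially the same route as the paper: combine \Cref{the: prev} (agent $n$ has not reached $v_i^n$) with \Cref{the: btype2} (agent $n$ has reached at least $v_{i-1}^n$) and conclude by the in-order traversal of states that agent $n$ sits exactly at $v_{i-1}^n$. Your explicit appeal to the type-1 edge semantics for the monotonic-progression step is a slightly more detailed spelling-out of what the paper leaves implicit, but it is not a different argument.
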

\begin{proof}
    From \Cref{the: prev}, agent $n$ must be before state $v_{i}^n$. From \Cref{the: btype2}, agent $n$ is at or after state $v_{i-1}^n$. Thus, agent $n$ must be at state $v_{i-1}^n$.
\end{proof}

\begin{theorem}
\label{the: third ignore}
    If a cycle contains a vertex $v_i^n$ and an edge in $E_{pair}$ that points from $v_j^n, j>i$, then this cycle will not lead to a deadlock.
\end{theorem}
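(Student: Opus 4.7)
The plan is to proceed by contradiction, using \Cref{the: prev} and \Cref{cor: btype at the position} as the two workhorses. Suppose for contradiction that the cycle described in the statement \emph{does} cause a deadlock. Let $e = (v_j^n, w)$ denote the bidirectional-pair edge in the cycle that points from $v_j^n$, so $e \in E_{pair}$.

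First I would apply \Cref{cor: btype at the position} to the edge $e$. Since $e$ is a selected edge of a bidirectional pair sitting on a cycle that (by assumption) causes the deadlock, the corollary forces agent $n$ to currently be at state $v_{j-1}^n$. Next I would invoke the fact that a TPG requires every agent to traverse the states on its own path in increasing index order (this is exactly what the type-1 edges encode). Being at $v_{j-1}^n$ therefore means that agent $n$ has already visited each of $v_0^n, v_1^n, \dots, v_{j-1}^n$. Because the hypothesis gives $i < j$, we have $i \leq j-1$, so in particular agent $n$ has already visited $v_i^n$.

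On the other hand, \Cref{the: prev} asserts that whenever a deadlock is caused by a cycle, no agent has visited any vertex in that cycle. Since $v_i^n$ lies on our cycle, this says agent $n$ has \emph{not} visited $v_i^n$, contradicting the conclusion of the previous paragraph. Hence the cycle cannot lead to a deadlock, which is exactly what the theorem claims.

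I expect the whole argument to be short, with the only subtle point being the implicit appeal to the in-order traversal property of TPGs. That property is not stated as a named lemma in the excerpt, but it follows immediately from the type-1 edges in \Cref{def: TPG} (which force agent $n$ to enter $v_{i'}^n$ only after $v_{i'-1}^n$) together with the BTPG execution policy. I would make this explicit in one sentence so the reader sees why ``agent $n$ is at $v_{j-1}^n$'' implies ``agent $n$ has already visited $v_i^n$'' for every $i \leq j-1$. Apart from that bookkeeping step, the proof is a direct two-line combination of \Cref{the: prev} and \Cref{cor: btype at the position}.
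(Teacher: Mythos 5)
Your proposal is correct and follows essentially the same route as the paper: assume a deadlock, use \Cref{cor: btype at the position} to place agent $n$ at $v_{j-1}^n$, use \Cref{the: prev} to conclude $v_i^n$ is unvisited, and derive a contradiction with $j>i$. Your extra sentence making the in-order traversal of type-1 edges explicit is a harmless elaboration of the step the paper compresses into ``state $v_i^n$ should be after state $v_{j-1}^n$.''
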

\begin{proof}
 Suppose that this cycle will lead to a deadlock. When the deadlock occurs, by \Cref{cor: btype at the position}, agent $n$ is at state $v_{j-1}^n$. By \Cref{the: prev}, agent $n$ has not visited state $v_{i}^n$. Thus, state $v_{i}^n$ should be after state $v_{j-1}^n$, contradicting the assumption of $j>i$. Thus, the theorem holds.
\end{proof}

We denote the cycles described in \Cref{the: third ignore} as \emph{non-deadlock cycles}. 
The cycle in Figure \ref{fig: example BTPG} is a non-deadlock cycle as it contains both vertex $A^1_2$ and edge $(F^1_4,E^3_3)$. 

\begin{property}[Valid BTPG-optimized]\label{pro:BTPG-o}
A BTPG is valid if it does not contain any cycles apart from rotation cycles, self cycles, and non-deadlock cycles.    
\end{property}

Therefore, our second algorithm BTPG-o extends BTPG-n by considering non-deadlock cycles. Specifically, in \Cref{algo: hasCycle}, we use $V_{vis}$ to record all vertices that have been visited along the current DFS branch and skip edge $(v_i^m, v_j^n) \in E_{pair}$ if the current DFS branch contains a state $v_{i'}^m, i'<i$ (\Cref{line:skip-e}). Furthermore, in \Cref{algo:Constructing BTPG - baseline}, we repeat the examining process of type-2 edges until no new bidirectional pair is discovered (\Cref{line:Epair-updated}) for the following reason.

Let us consider the example in Figure \ref{fig: example BTPG}.
If we examine $(D^2_4,C^3_2)$ before $(F^1_4,E^3_3)$, then $(D^2_4,C^3_2)$ is not added into $E_{pair}$ because, based on the available information at that point, cycle $E^3_3 \rightarrow C^2_3 \rightarrow A^1_2  \rightarrow E^1_3 \rightarrow F^1_4 \rightarrow E^3_3$ is considered a cycle that leads to a deadlock. However, after $(F^1_4, E^3_3)$ is examined and added into $E_{pair}$, this cycle becomes a non-deadlock cycle, and thus, $(D^2_4,C^3_2)$ and $(E^3_3,C^2_3)$ are then added into $E_{pair}$.
Thus, to maximize the size of $E_{pair}$, we repeat the examining process until no new bidirectional pair is discovered. This is unique to BTPG-o, where the validity condition becomes ``easier" to satisfy as more bidirectional pairs are added since our DFS skips some edges in bidirectional pairs (see \Cref{line:skip-e} in Algorithm 2). 

\section{Empirical Evaluation}

\begin{table*}[th!]
    \centering
    \resizebox{\textwidth}{!}{
    \begin{tabular}{|c|c|c|c|c|c|c|c|c|c|c|c|c|}
    
    \hline
    &\multicolumn{2}{|c|}{\texttt{den520}}& 
    \multicolumn{2}{|c|}{\texttt{warehouse}} &
    \multicolumn{2}{|c|}{\texttt{Paris}} &
    \multicolumn{2}{|c|}{\texttt{random}} &
    \multicolumn{2}{|c|}{\texttt{empty}} &
    \multicolumn{2}{|c|}{\texttt{Berlin}} \\ \hline
    & BTPG-n& BTPG-o& BTPG-n& BTPG-o& BTPG-n&BTPG-o& BTPG-n&BTPG-o& BTPG-n&BTPG-o& BTPG-n&BTPG-o \\ \hline
    Mean imp.& 5.9\%& \textbf{8.9}\%& 12.1\%& \textbf{17.9}\%& 10.6\%& \textbf{14.6}\%& 12.9\%& \textbf{15.2}\%& 14.5\%& \textbf{20.9}\%& 9.6\%& \textbf{14.6}\% \\ \hline
    Median imp.& 5.7\%& \textbf{8.1\%}& 12.3\%& \textbf{17.8\%}& 10.5\%& \textbf{14.2\%}& 10.3\%& \textbf{12.2\%}& 14.0\%& \textbf{20.0\%}& 9.2\%& \textbf{14.2\%} \\ \hline
    Max imp.& 14.0\%& \textbf{19.3\%}& 25.0\%& \textbf{35.3\%}& 20.7\%& \textbf{25.4\%}& 50.0\%& \textbf{63.6\%}& 37.5\%& \textbf{42.9\%}& 21.3\%& \textbf{24.7\%} \\ \hline
    Min imp.& 1.1\%& \textbf{2.9}\%& 2.7\%& \textbf{6.7}\%& 4.1\%& \textbf{7.0}\%& 0.0\%& 0.0\%& 3.3\%& \textbf{7.4}\%& 3.2\%& \textbf{7.6}\% \\ \hline \hline
     \# Type-2 edges& \multicolumn{2}{|c|}{26,738} & \multicolumn{2}{|c|}{15,084} & \multicolumn{2}{|c|}{24,401} & \multicolumn{2}{|c|}{1,151} & \multicolumn{2}{|c|}{2,898} & \multicolumn{2}{|c|}{27,922} \\ \hline
    \# Singleton edges & \multicolumn{2}{|c|}{2,172} & \multicolumn{2}{|c|}{1,051} & \multicolumn{2}{|c|}{2,688} & \multicolumn{2}{|c|}{146} & \multicolumn{2}{|c|}{817} & \multicolumn{2}{|c|}{2,265} \\ \hline 
    \# Bi-Pairs found& 663& \textbf{1,008}& 368& \textbf{532}& 1,254& \textbf{1,725}& 60& \textbf{76}& 248& \textbf{360}& 1018& \textbf{1,409} \\ \hline
    \# Used Bi-Pairs & 38& \textbf{68}& 36& \textbf{57}& 82& \textbf{125}& 6& 6& 25& \textbf{40}& 68& \textbf{112}\\ \hline \hline
    BTPG runtime (s)& 23.72& 161.88& 6.31& 8.72& 58.12& 181.33& 0.03& 0.04& 0.69& 1.06& 43.32& 72.87 \\ \hline
    MAPF runtime (s)& \multicolumn{2}{|c|}{1.93} & \multicolumn{2}{|c|}{9.44} & \multicolumn{2}{|c|}{1.12} & \multicolumn{2}{|c|}{19.80} & \multicolumn{2}{|c|}{1.11} & \multicolumn{2}{|c|}{9.90} \\ \hline

    \end{tabular}}
    
   \caption{Statistics of BTPG-naïve/optimized. The number of agents for the six maps selected for the statistics are 100, 120, 150, 50, 100, and 150, respectively. All data in the second block are averages of 10 scenarios for each map; Bi-Pairs: bidirectional pairs; imp.: improvement; Singleton edges: type-2 edges that cannot be grouped. 
   }
   \label{tab:Statistics of BTPG-naive}
 \end{table*}

Our experiments work as follows. First, we use the optimal MAPF solver CBSH2-RTC \cite{li_pairwise_2021} to obtain the MAPF plan for each given MAPF instance. We have in total 3,900 simulations, consisting of six different benchmark maps from~\cite{stern_multi-agent_2021}, as shown in \Cref{fig:comparison}, each map with five to eight different numbers of agents, and each number of agents with ten random instances. The largest number of agents for each map is determined by the largest number of agents CBSH2-RTC can solve within 2 minutes. 
Then, we convert each MAPF plan into a TPG (for which the runtime is negligible). 
Next, we construct BTPG using our proposed algorithms BTPG-naive and BTPG-optimized. Last, we simulate the TPG and BTPG execution policies with 10\% agents having a 30\% chance of being delayed by 5 timesteps at each non-delayed timestep. Each instance is simulated with 10 random seeds, resulting in 3,900 simulations per algorithm.
All algorithms were implemented in C++ \footnote{https://github.com/YifanSu1301/BTPG}, and all experiments were run on a PC with a 3.60 GHz Intel i7-12700KF CPU and 32 GB of RAM. 
 

\paragraph{Execution Time} \Cref{fig:comparison} reports the \emph{mean execution timesteps}, namely the average number of timesteps that each agent takes to reach its target location during execution. We include a lower bound called \emph{Ideal}, which is the sum of timesteps in the original MAPF plan plus the total delay timesteps of the delayed agents, then divided by the number of agents. Essentially, \emph{Ideal} represents an overly optimistic scenario where delayed agents do not cause any additional waits for other agents. As shown, BTPG-o consistently outperforms BTPG-n, which in turn consistently outperforms TPG, across all maps and all numbers of agents. 

To quantify the execution time improvement of BTPG over TPG, we define \emph{improvement} as $\frac{T_{TPG} - T_{BTPG}}{T_{TPG}-T_{Ideal}}$, where $T_{TPG}$, $T_{BTPG}$, and $T_{Ideal}$ are the mean execution timesteps of TPG, BTPG, and Ideal, respectively. \Cref{fig:scatter plot} shows that our improvement is always non-negative over 3,900 simulations, even though, as mentioned in \Cref{sec:BTPG}, we could have negative improvements in theory. The top block of \Cref{tab:Statistics of BTPG-naive} further provides detailed numbers on the improvements over the instances with the largest number of agents for each map. The median improvement of BTPG-o is in the range of 8-20\%, with maximum values around 19-64\%.

\begin{figure}[t!]
     \centering     
     \includegraphics[width=0.39\textwidth]{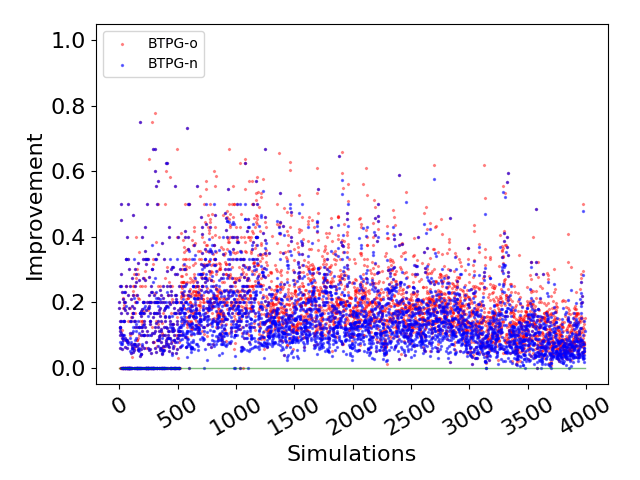}
     \caption{Improvement of BTPG over TPG per instance. We find no instances with negative improvements and 3.6\%, 23.3\%,  41.9\%, and 31.2\% of instances with no improvement, 0-10\% improvement, 10-20\% improvement, and $>$20\% improvement across all simulations, respectively.}
     \label{fig:scatter plot}
 \end{figure}
 
\paragraph{Bidirectional Pairs} The middle block of \Cref{tab:Statistics of BTPG-naive} reports how many bidirectional pairs that our algorithms find and that are useful. While the TPG has thousands or even ten thousands of type-2 edges, only about 10\% of these edges are not grouped and have the chance to be changed to bidirectional pairs. After BTPG-o evaluates these singleton type-2 edges, about 50\% are changed to bidirectional pairs. We define a bidirectional pair as \emph{used} if the agents select the reversed edge rather than the original TPG type-2 edge during execution. We see that roughly 10\% of bidirectional pairs are used. Although this is a small percentage over the total number of type-2 edges, these used bidirectional pairs are the contributors to the significant reduction in execution time that we reported above.



\begin{figure}[t!]
    \centering
    \includegraphics[width=0.45\textwidth]{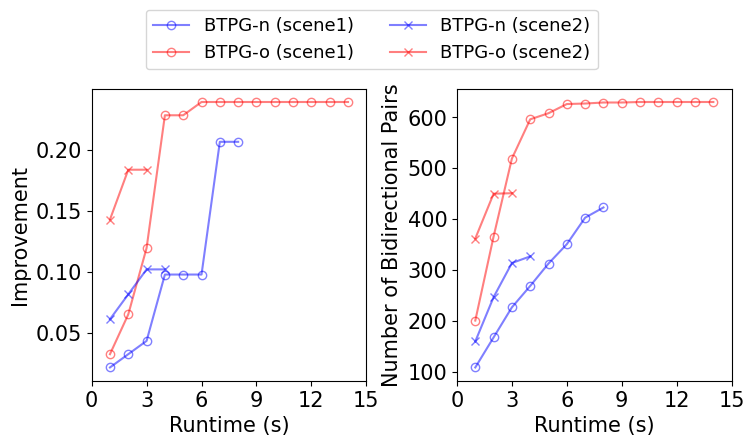}
    \caption{Anytime behavior of BTPG-n/o on \texttt{warehouse} with 120 agents. Scenes 1 and 2 are scenarios with the longest and shortest BTPG-o runtimes, respectively. The appendix has similar results on other maps.}
    \label{fig:anytime}
\end{figure}

\paragraph{Runtime}
Both \Cref{fig:comparison} and the bottom block of \Cref{tab:Statistics of BTPG-naive} report the CPU runtime of our algorithms. As expected, BTPG-o is slower than BTPG-n. 
However, we find that the longer runtime of BTPG-o is due to finding more bidirectional pairs rather than inefficiency. \Cref{fig:anytime} plots the anytime behavior of both algorithms. For any given cut-off time, BTPG-o finds more bidirectional pairs and thus leads to better improvement than BTPG-n. BTPG-o is more efficient because its DFS can skip over more edges than BTPG-n.



\section{Conclusion}
We constructed a new graphical representation of passing orders in the MAPF plan, BTPG, by proposing the concept of bidirectional pairs. The main difference between BTPG and TPG lies in the fact that agents can switch the order of passing certain locations during execution. Two algorithms, BTPG-n and BTPG-o, are proposed to construct a BTPG. The results indicate that following BTPGs consistently outperform following TPGs by 8-20\% when agents get delayed. Also, we show our proposed algorithms are anytime, and given a fixed time budget, BTPG-o outperforms BTPG-n. Overall, we convincingly show that allowing switching dependencies in a MAPF plan allows us to improve execution time without replanning. 

\newpage
\section*{Acknowledgments}
The research is supported by the National Science Foundation (NSF) under grant number 2328671. The views and conclusions contained in this document are those of the authors and should not be interpreted as representing the official policies, either expressed or implied, of the sponsoring organizations, agencies, or the U.S. government.

\bibliography{BTPG}
\newpage
\appendix

\setcounter{figure}{0}
\renewcommand{\thefigure}{A\arabic{figure}}
\setcounter{table}{0}
\renewcommand{\thetable}{A\arabic{table}}

\section{More Results of the Anytime Behavior}
In the paper, we plot the anytime behavior of BTPG-n and BTPG-o on map \texttt{warehouse}. Here, we provide the results on other maps. Again, Scenes 1 and 2 are the instances with the longest and shortest runtime of BTPG-o in the experiments for that map, respectively.

\begin{figure}[h]
    \centering
    \includegraphics[width=0.45\textwidth]{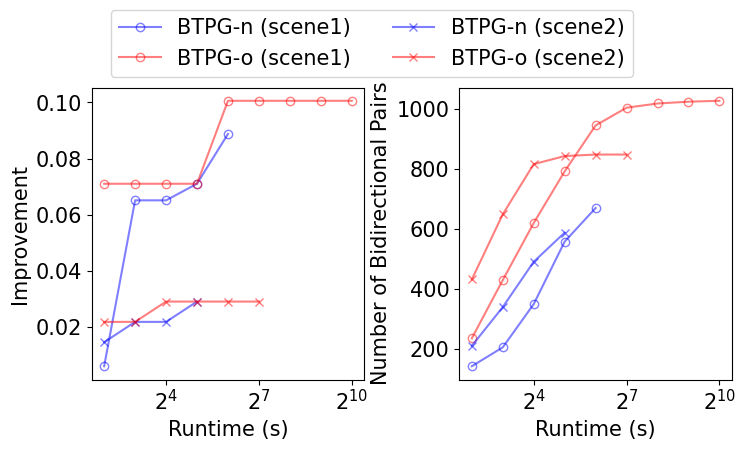}
    \caption{\texttt{den520}}
\end{figure}
\begin{figure}[h]
    \centering
    \includegraphics[width=0.45\textwidth]{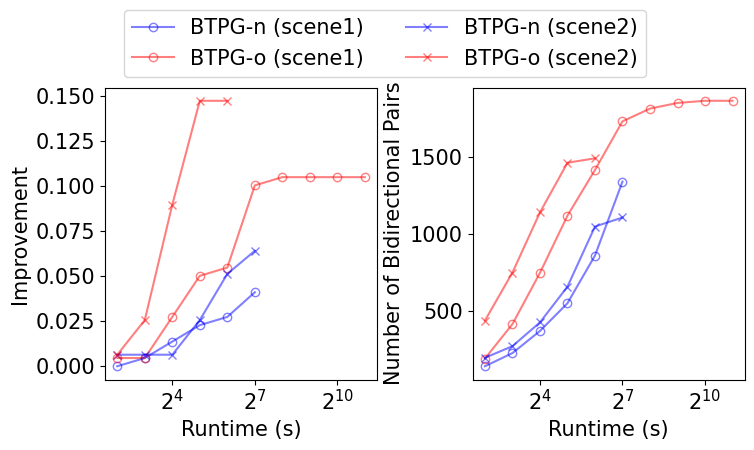}
    \caption{\texttt{Paris}}
\end{figure}
\begin{figure}[h]
    \centering
    \includegraphics[width=0.45\textwidth]{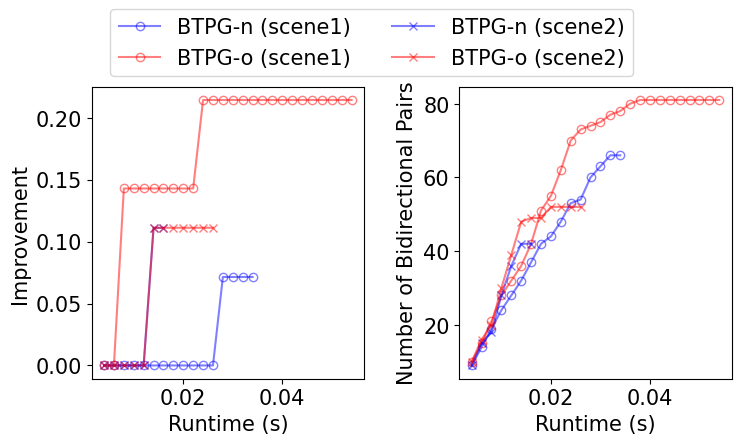}
    \caption{\texttt{random}}
\end{figure}
\begin{figure}[h]
    \centering
    \includegraphics[width=0.45\textwidth]{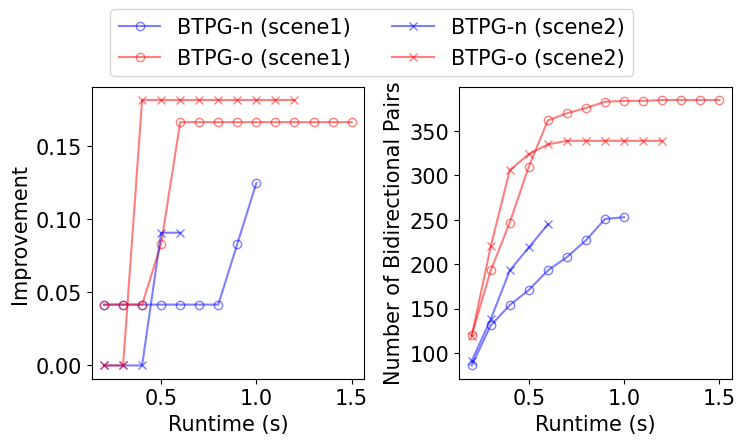}
    \caption{\texttt{empty}}
\end{figure}
\begin{figure}[h]
    \centering
    \includegraphics[width=0.45\textwidth]{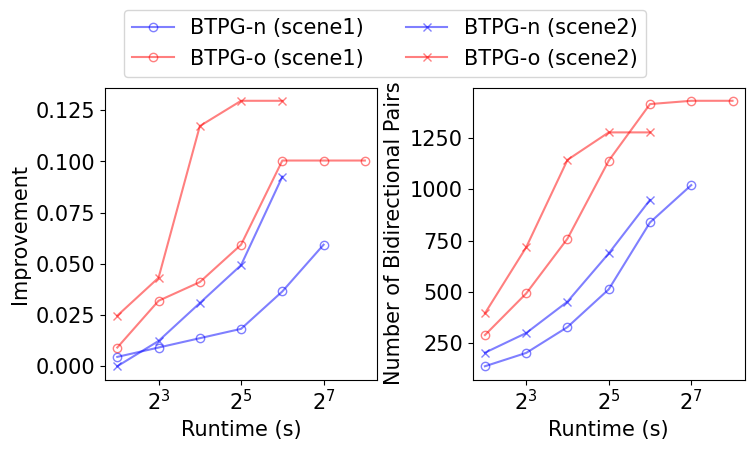}
    \caption{\texttt{Berlin}}
\end{figure}
\end{document}